\newtheorem{theorem}{Theorem}[section]
\newtheorem{lemma}[theorem]{Lemma}
\begin{document}
\preprint{APS/123-QED}

\title{Information-directed sampling for bandits: a primer}

\author{Annika Hirling}
\affiliation{Institute of Physics, School of Basic Sciences, École Polytechnique Fédérale de Lausanne - EPFL, Lausanne, Switzerland}
\affiliation{Quantitative Life Sciences section, The Abdus Salam International Center for Theoretical Physics (ICTP), Trieste, Italy}
\author{Giorgio Nicoletti}
\affiliation{Quantitative Life Sciences section, The Abdus Salam International Center for Theoretical Physics (ICTP), Trieste, Italy}

\author{Antonio Celani}
\affiliation{Quantitative Life Sciences section, The Abdus Salam International Center for Theoretical Physics (ICTP), Trieste, Italy}
\affiliation{Department of Oncology, Universit\`a degli Studi di Torino, Italy}

\begin{abstract}
\noindent The Multi-Armed Bandit problem provides a fundamental framework for analyzing the tension between exploration and exploitation in sequential learning. This paper explores Information Directed Sampling (IDS) policies, a class of heuristics that balance immediate regret against information gain. We focus on the tractable environment of two-state Bernoulli bandits as a minimal model to rigorously compare heuristic strategies against the optimal policy. We extend the IDS framework to the discounted infinite-horizon setting by introducing a modified information measure and a tuning parameter to modulate the decision-making behavior. We examine two specific problem classes: symmetric bandits and the scenario involving one fair coin. In the symmetric case we show that IDS achieves bounded cumulative regret, whereas in the one-fair-coin scenario the IDS policy yields a regret that scales logarithmically with the horizon, in agreement with classical asymptotic lower bounds. This work serves as a pedagogical synthesis, aiming to bridge concepts from reinforcement learning and information theory for an audience of statistical physicists.
\end{abstract}

\maketitle

\section{Introduction}

\begin{displayquote}
“Knowing ignorance is strength.  Ignoring knowledge is sickness.”
  ―Lao Tsu, Tao Te Ching
\end{displayquote}

\noindent Understanding how to make good decisions in uncertain and partially observable environments is the key to success for humans, animals and machine alike. How can we make a profitable use of the limited knowledge we have about the world? How can we increase our knowledge without indulging in idle curiosity? How can we maximize our gain without being greedy and close-minded? 

The Multi-Armed Bandit (MAB) problem -- which describes situations where a decision-maker must choose repeatedly among different options, each giving uncertain rewards -- serves as the canonical mathematical framework for studying this kind of dilemmas \cite{robbins1952aspects,Lattimore_Szepesvári_2020}. Although bandit problems are often characterized as a simplified model of the broader Reinforcement Learning (RL) landscape, they effectively encapsulate the central tension of sequential learning: the trade-off between exploration and exploitation \cite{Sutton_Barto_2018}. The agent must continuously decide whether to adhere to the option currently believed to be the best (exploitation) or to sacrifice immediate returns to gather data on less-tested options (exploration). While the problem formulation is simple, the optimal resolution of this tension is mathematically profound and often computationally intractable. 

Despite the intuitive appeal of the exploration-exploitation trade-off, the question of how to make this notion operational still stands. We know that balancing information acquisition and regret (the shortfall with respect to the best choice in hindsight) is key, but the precise, optimal weighting function for general settings is not known in general.

Information Directed Sampling (IDS) policies represent a class of heuristic algorithms that make this trade-off explicit \cite{russo2014learning,russo2018learning,min_information-theoretic_2023,kirschner2020information}. IDS strategies are defined by an optimization objective that balances immediate regret against information gain. At each step of the task, the policy evaluates the ``cost" of exploration in terms of forgone reward against the ``value" of the resulting reduction in uncertainty. 

 Comparing the performance of interpretable IDS strategies against a theoretically optimal policy can offer significant insight into the nature of efficient learning. Unfortunately, in general bandit problems -- despite their deceptive simplicity -- rarely allow for closed-form solutions. The calculation of Gittins indices \cite{gittins1979bandit} 
 or the solution of the Bellman equation for continuous belief spaces \cite{krishnamurthy2016pomdp} is often feasible only through approximation. Consequently, we must look to the simplest non-trivial cases.

To this end, we treat the two-state Bernoulli armed bandit problem as the "hydrogen atom" of decision-making under uncertainty. Just as the hydrogen atom provides a tractable quantum mechanical system that guides our understanding of the physics of more complex elements, the two-state Bernoulli bandit is simple enough to allow for the computation of the optimal strategy, yet complex enough to exhibit the non-trivial dynamics of belief updates and information value.

The primary goal of this work is to study a specific sub-class of IDS strategies and compare them directly to the optimal policy within this controlled setting. By restricting our analysis to a problem simple enough to solve, we can rigorously quantify how closely the explicit information-regret ratios of IDS approximate the optimal decision path. This comparison allows us to probe the efficacy of Information Directed Sampling, validating its design principles while highlighting the gaps between heuristic efficiency and theoretical optimality.

This is a largely a pedagogical article.  It would be a daunting task to even provide a cursory overview of the work done in this field. The reader interested in delving deeper in the algorithmic approaches to bandit problems should definitely read the comprehensive and beautiful book by Lattimore and Szepesvari \cite{Lattimore_Szepesvári_2020}. Many of the theoretical concepts discussed herein have been introduced in prior literature. Our aim is to synthesize and scrutinize them in a language that can be accessible to statistical physicists who are interested in problems that involve decision-making, stochastic control and information theory.

    


\section{Partially Observable Markov Decision Processes with two states}
\noindent In this section, we introduce the general framework of a partially observable Markov decision process, where an agent takes actions in an unknown environment and receives partial observations in response. We focus on the simple scenario where the environment can be described by two discrete states, which has important applications in the class of problems known as ``bandit problems''.

\begin{figure}[t]
    \centering
    \includegraphics[width=1\linewidth]{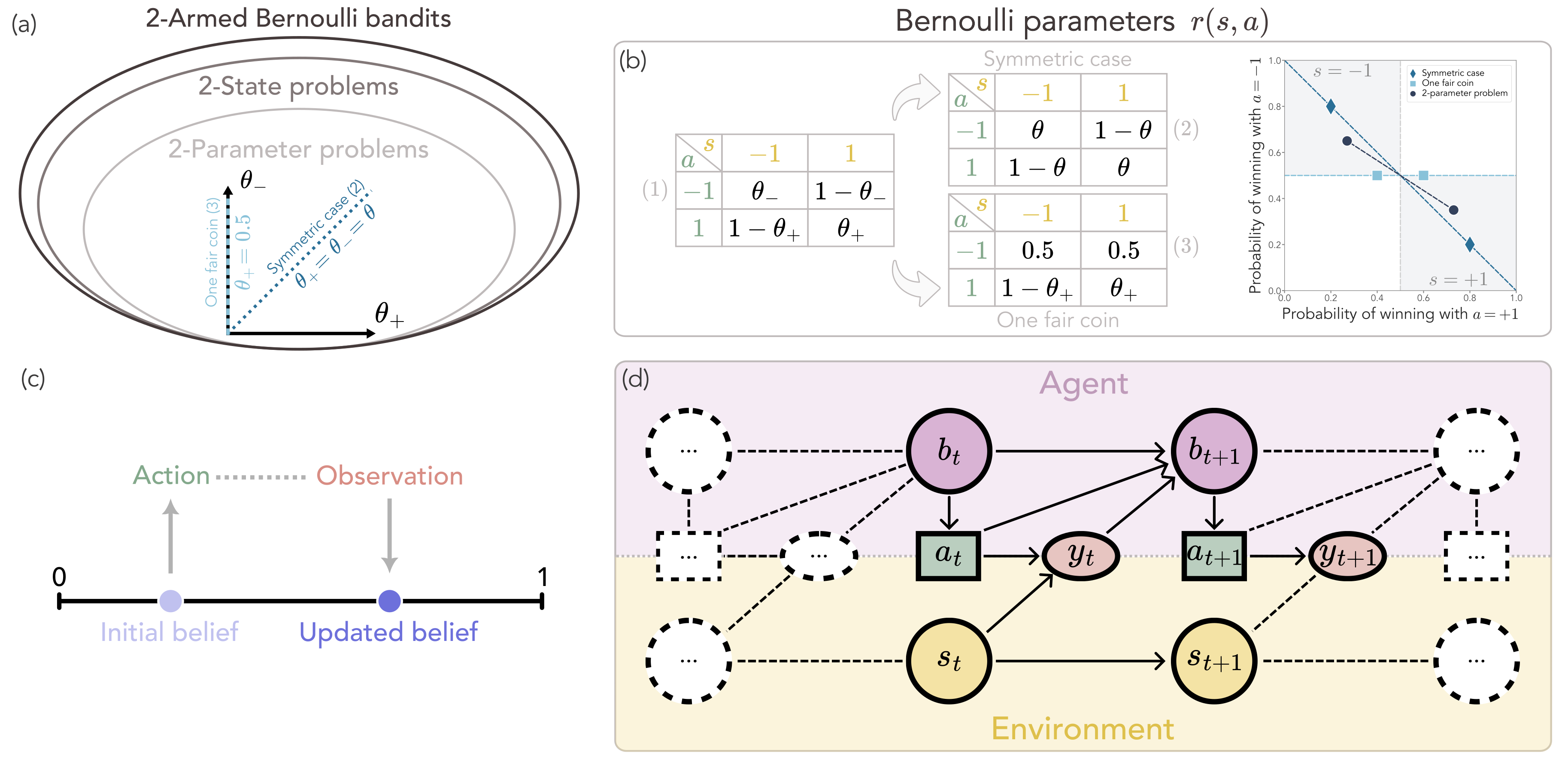}
    \caption{(a-b) Classes of two-armed Bernoulli bandit problems. The outer square contains all two-armed bandit problems where the reward distributions are Bernoulli. The outer circle contains all the two-state problems, which in general are described by four different parameters. The inner circle contains the problems parametrized by $\theta_+,\theta_-$ -- table 1 of panel (b). The special symmetric and one fair coin cases are shown in the $(\theta_+,\theta_-)-$space and in tables 2 and 3 of panel (b), respectively. (c) Schematic sketch of a belief update for our two-state problem, where the belief is defined on a one-dimensional space, and of the evolution of a POMDP in time, showing the belief $b$, state $s$, observations $y$, and actions $a$ for two steps of the process.}
    \label{fig:placeholderSchema2armedBB_POMDP}
\end{figure}

\subsection{Two-armed Bernoulli Bandits}
\noindent Consider the problem of a gambling agent who has to choose between two biased coins and wants to maximize their total gain. Each coin has a fixed but a priori unknown probability of landing on the ``winning'' face, which yields a reward. The gambler faces the following dilemma. On one hand, it should gather information about both coins so as to identify the best one with reasonable certainty and then be able to act optimally. On the other hand, they want to avoid accumulating too much loss during this exploration.

This decision process is part of a larger class of problems where an agent repeatedly chooses between two actions and collects rewards, which can be thought of as either ``win'' or ``lose'', and they are described by a random variable that follows a Bernoulli distribution for each action. The rewards effectively represent the observations that the agent receives from an unknown environment -- i.e., the agent does not have access to the pair of Bernoulli parameters of the coins. In other words, the agent cannot directly observe the state variable that specifies the environment, and this partial observability makes the problem non-trivial. 

\subsubsection{Two-state problems}
\noindent To simplify the problem, we now assume that the environment can only be in one of two possible -- but unobservable -- states. The problem is fully described by the following elements:
\begin{itemize}
	\item a state of the environment $s\in\{-1,1\} = \{\text{“Coin 1 is better”},\ \text{“Coin 2 is better”} \}$;
	\item an action $a\in\{-1,1\} = \{\text{“Choose Coin 1”},\ \text{“Choose Coin 2”} \}$;
	\item an observation $y\in\{0,1\}= \{\text{“Loss”},\ \text{“Win”} \}$;
	\item a reward $r(y)\equiv y$.
\end{itemize}
Thus, the probability of receiving an observation is described by the Bernoulli distribution
\begin{equation}
    p(y | s, a) = [\theta(s, a)]^y [1-\theta(s,a)]^{1-y}
\end{equation}
where $\theta(s,a)$ represents the probability of winning after taking the action $a$ while the environment is in state $s$. We stress here that we exclusively focus on problems where the state does not change -- i.e., the better coin is always the same. Thus, an instance of this two-state problem is generally specified by the four Bernoulli parameters $\theta(s,a)$, one per state-action pair, which also correspond to the expected rewards $r(s, a) = E[R|s, a]$.

This parameter space is still relatively large and not straightforward to interpret. Since the purpose of this work is to study the trade-off between seeking an immediate reward and gathering information, we aim to further reduce the number of parameters while still making it possible for the actions to have different expected gains of information. Intuitively, an action bears a large information content if its reward distribution significantly differs from one state to the other. In our problem, this can be quantified by the difference in the Bernoulli parameters for the two states which we will call the bias $\delta_a=r(s=-1,a)-r(s=1,a)$. By focusing on a subclass of problems in which the Bernoulli parameters take the values shown in table 1 of Figure \ref{fig:placeholderSchema2armedBB_POMDP}b, where $\theta_-,\theta_+>0.5$, we allow the actions to have different biases namely, $\delta_-=2\theta_- -1$ for action $-1$ and $\delta_+=2\theta_+ -1$ for action $1$. 
\subsubsection{Special cases of two-state problems}
\noindent Besides the general case, we consider two particular choices of parameters that are particularly relevant. If $\theta_+ = \theta_- = \theta$, we have a \textit{symmetric} bandit problem, where the knowledge of one coin automatically translates into knowledge of the other. Indeed, the bias is the same for both actions, $\delta_-=\delta_+$. As we will see, this can be quantified exactly by the fact that the expected information gain is the same for both actions.

Another relevant example corresponds to the case where one of the two coins has zero bias, i.e., of \textit{one fair coin}. In particular, we take $\theta_- = 0.5$, so that $\delta_- = 0$, which implies that no information can be gained when choosing the $a = -1$ action. Thus, the agent now has to choose whether to play conservatively by playing the fair coin or attempt to gather information about the other coin and risk losing rewards. In this sense, in terms of information gain, these two subclasses constitute two extreme cases of the general problem where $\theta_+$ and $\theta_-$ are different (see Figure \ref{fig:placeholderSchema2armedBB_POMDP}, tables 2 \& 3).

\subsection{Bayesian POMDPs}
\noindent The theoretical framework that describes these problems is that of partially observable Markov decision processes (POMDP). A sketch of a POMDP is shown in Figure \ref{fig:placeholderSchema2armedBB_POMDP}c. The process is composed of the environment, the agent, and the interface between them. At each step, the environment is in a latent state that is not directly observable by the agent. Therefore, to make informed decisions, the agent must keep an (often compressed) memory of its past interactions with the environment, which serves as an indication of its current state. In the Bayesian setting, this memory is called the \textit{belief} $b(s)$, which is a probability distribution over the states corresponding to the posterior distribution given a prior and the sequence of interactions between the agent and the environment. 
In the two-state problem that is studied here, the belief can be parametrized as 
\begin{equation}
    b(s)=\frac{1+s\beta }{2},
    \label{eq:belief_parametrization}
\end{equation}
which means in particular that $b(1)=1 \iff \beta=1$ and $b(-1)=b(1)=1/2\iff \beta=0$.

The process then goes as follows: the agent selects an action $a$ and consequently makes an observation $y$, which is assumed to be drawn from a probability distribution that depends on the state of the environment and the selected action. In our case, this is a Bernoulli distribution $y\sim p(y|sa)=\text{Bernoulli}(\theta(s,a)))$. In general, the environment may then transition to a new state based on a stochastic process -- e.g., due to the fact that the agent's action has affected or changed parts of it. In this work, however, the environment is assumed to be stationary. The agent has a model of the environment, meaning that it knows the distribution of the observation for each state-action pair. Therefore, given a prior belief distribution $b(s)$, an action $a$, and the consequent observation $y$ the agent can update its belief according to Bayes' theorem
\begin{align}
    b_{ay}'(s)=\frac{b(s)p(y|s,a)}{\sum_{s'}b(s)p(y|s',a)}
    \label{eq:belief_update}
\end{align}
so that the new belief $b_{ay}'(s)$ encodes the new observation made by the agent. Finally, the agent has to select the action at each step. This is done through a belief-dependent probability distribution over the actions called the \textit{policy}, 
\begin{equation}
    \pi(a|b)=p(A=a|b)
\end{equation}
which constitutes the control of the agent. In general, the optimal policy is deterministic, mapping each belief to a single action. However, heuristic policies can be stochastic, so that, at least in some regions of the belief space, the policy can differ from delta distribution concentrated on a single action.
\subsection{Bellman Optimality equation}
\noindent The agent's objective is to select a policy that maximizes its total gain, either over a fixed time horizon $T$ or over an infinite time horizon. In this work, we focus on the latter case, and the cumulative reward is a random variable that can be expressed as:
\begin{equation}
    G = \sum_{t=0}^\infty \gamma^t R_t
\end{equation}
where $R_t$ is the reward at time $t$ and $\gamma\in\left[0,1\right]$, called the \textit{discount factor}, ensures that the sum remains finite. A consequence of discounting is that the weight attributed to the reward decreases geometrically. Although there is no fixed time horizon, this results in an effective one, $T_\mathrm{eff}=1/(1-\gamma)$, which can be seen as a measure of the characteristic duration of the problem.

Assuming the agent follows a given policy $\pi$ and there is a prior distribution $b$ over the initial state, i.e., an initial belief of the agent, one can write the expectation of the discounted sum with respect to this policy as
\begin{equation}
    v_\pi(b)=\mathbb{E}_\pi\left[\sum_{t=0}^\infty \gamma^tR_t\Big\vert b_0=b\right]
    \label{eq_def:value_function}
\end{equation}
where $v_\pi(b)$ is a function of the initial belief distribution called the \textit{value} of the policy $\pi$.
By splitting the sum into the immediate reward and all the subsequent ones, one can obtain a recursive equation known as the \textit{Bellman's recursion equation} that implicitly defines the value function:
\begin{align}
    v_\pi(b)=\sum_a \pi(a|b)\sum_{y}p(y\vert s,a)\left[r(y)+\gamma v_\pi(b'_{ay})\right]
    \label{BellmanEqPOMDP}
\end{align}
where $r(y)$ is the function mapping the observation to the reward and $b'_{ay}$ is the updated belief (Eq.~\eqref{eq:belief_update}). In our case, we simply have $r(y)\equiv y$. We stress that this form of the Bellman's equation holds when the state of the environment does not change, which is the only case we consider here.

By definition, the policy $\pi^*$ that maximizes the value function is the optimal one. In Eq.~\eqref{eq_def:value_function}, if we denote with $v^*(b)$ the associated value function, after separating the right-hand side, one obtains the \textit{Bellman's optimality equation}:
\begin{align}
    v^*(b)=\underset{a}{\text{max}}\left[\sum_{y}p(y\vert s,a)\left[r(y)+\gamma v^*(b'_{ay})\right]\right] \; .
    \label{BellmanOptimalityEqPOMDP}
\end{align}
For relatively simple problems, like the two-state, two-armed bandits studied here, Equations \ref{BellmanEqPOMDP} and \ref{BellmanOptimalityEqPOMDP} can be used to compute the value function of a policy numerically \cite{smallwood1973optimal} (see Figure~\ref{fig:Illustration_Value_Regret}. The standard algorithm to solve Bellman's optimality equation is \textit{value iteration}, which is guaranteed to converge to the solution of the equation. The pseudocode and proof of convergence for this method are provided in the Appendix \ref{app:value_iteration}.

\subsection{Regret of a policy}
\noindent The Bellman's equation tells us that, to each state $s$, there is an associated optimal action $a^*(s)=\text{argmax}_a\;r(s, a)$. Since, in our case, the state of the system is stationary, if the agent knew the state, it could achieve a maximum expected discounted reward of $V_\mathrm{MDP}^*(s)=r(s,a^*(s))/(1-\gamma)$ by always selecting the optimal action. $V_{MDP}^*(s)$ represents the optimal value at state $s$ of the underlying \textit{fully observable} Markov decision process (MDP). The underlying MPD is nothing but the equivalent problem with the agent's uncertainty about the state removed. For an arbitrary belief $b(s)$, the optimal MDP value is given by the weighted sum of the optimal MDP values:
\begin{equation}
    \overline{v}(b)=\sum_s b(s)V_\mathrm{MDP}^*(s) \; .
    \label{eq:opt_MDP_value}
\end{equation}
Given that in a POMDP, the agent never observes the actual state of the system, it cannot hope to perform better than another agent who does. A relevant way to measure the performance of a POMDP policy is therefore to subtract its value function from this optimal MDP-value. This quantity is called the \textit{regret} of a policy:
\begin{align}
    \mathcal{R}(b,\pi)=\overline{v}(b)-v_\pi(b) \; .
\end{align}
An example of value and regret in a generic two-state problem is shown in Figure \ref{fig:Illustration_Value_Regret} for two different discount factors $\gamma$. At the edges of the belief space, the agent knows the environment's state with certainty, which, for a stationary problem, means that it can select the optimal action at every future step. Therefore, the optimal POMDP and MDP values coincide at these points, and the regret vanishes. The value and regret both increase with $\gamma$, since the effective time horizon of the problem scales like $1/(1-\gamma)$. Thus, for a higher discount factor, the agent can collect more reward but also accumulate more regret. 

\begin{figure}
    \centering
    \includegraphics[width=1.\linewidth]{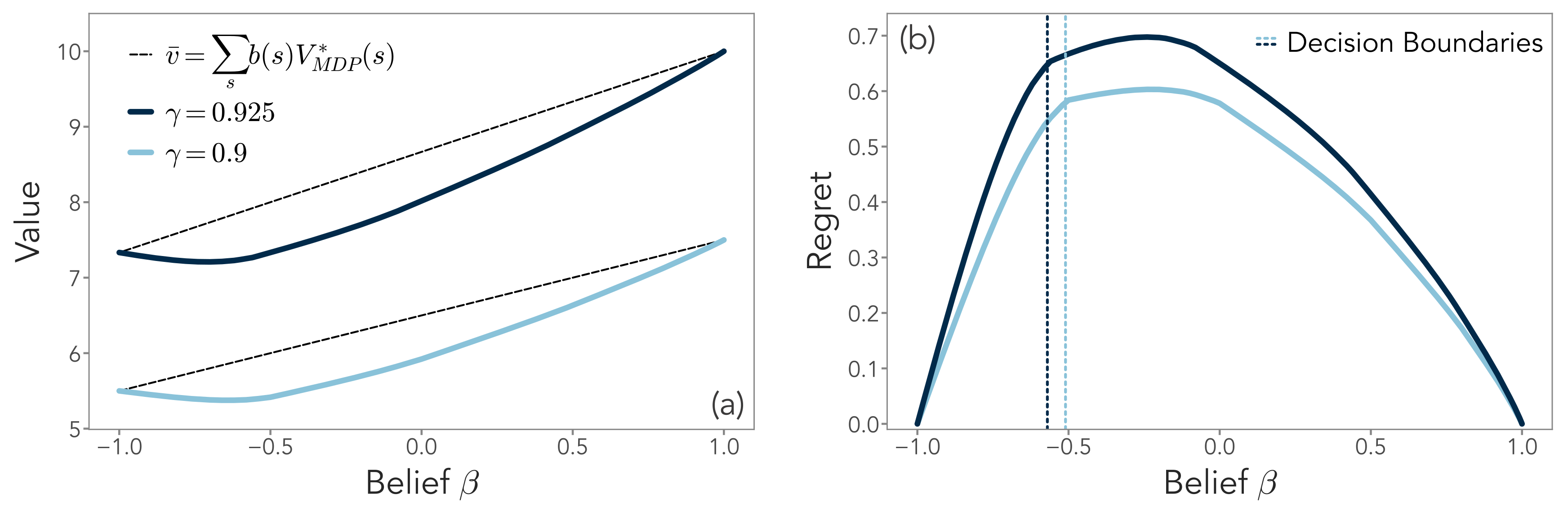}
    \caption{Examples of value functions and corresponding regrets. (a) Optimal value functions at different $\gamma$ for $\theta_+ = 0.7$ and $\theta_- = 0.55$. The dotted line corresponds to the value of the underlying Markov decision process (MDP). (b) Regrets for the values in panel (a). The dotted line corresponds to the decision boundary, where the optimal policy switches from one action to the other.}
    \label{fig:Illustration_Value_Regret}
\end{figure}

The decision boundary $\beta_c$ separates the belief space into a left and a right region where the optimal policy respectively selects action $-1$ and $1$. In the present example, the boundaries are shifted. This means that the optimal policies select action $1$ even for beliefs $\beta_c<\beta<0$ where action $-1$ is believed to be more rewarding. As we will see, this is an example of favoring exploration over exploitation, which is why the boundary is even more shifted for larger $\gamma$, where the effective horizon is longer and information about the state is more valuable.

\section{Information directed sampling}
\noindent Decision-making problems often require a policy that balances exploitation -- playing the action that is currently believed to be optimal to maximize the rewards -- and exploration -- to attempt to find better actions. However, in general, tackling the issue of finding such optimal policies directly from Bellman's optimality equation (Eq.~\eqref{BellmanOptimalityEqPOMDP}) is computationally unfeasible, as the state and action spaces may become extremely large. In this scenario, one can either resort to approximate methods or design a heuristic policy and test its performance a posteriori. Here, we focus on a class of heuristic policies known as \textit{Information Directed Sampling} (IDS), introduced by Russo and Van Roy \cite{russo2014learning}. These policies aim to balance exploration and exploitation by minimizing at each step the ratio between the expected one-step regret and a measure of the expected information gain. 

Beyond its intuitive appeal, one can prove that IDS policies tighten some bounds on the total regret. These bounds have previously been derived for the case of a finite time horizon and no discount factor \cite{russo2014learning,min_information-theoretic_2023}. Here,  we first generalize these bounds for the discounted, infinite-horizon setting, and then show how it is possible to introduce an explicit tuning parameter $\alpha$ that allows the resulting IDS policy to move between a greedy, exploitative behavior to a more exploration-oriented one. We name this policy as the IDS($\alpha$) one.

Let $\Delta_\pi$ be the \textit{one step regret},
\begin{equation}
    \Delta_\pi(b) =\sum_{a,s}\left[\text{max}_{a'} r(s,a')- r(s,a)\right]\pi(a|b) b(s)
\end{equation} 
which measures the difference between the expected reward that could be obtained by knowing the state $s$ and playing the optimal action, and the expected reward following policy $\pi$. We introduce the one-step information measure as
\begin{align}
    I_\pi(b)=(1-\gamma)H(b)+\gamma\left(H(b)-\sum_{ay}\pi(a|b)p_b(y|a) H(b'_{ay})\right)
    \label{eq:infofunction}
\end{align}
where $H(b)$ is the Shannon entropy of the belief distribution, $H(b)=-\sum_sb(s)\log b(s)$. Crucially, the second term in Eq.~\eqref{eq:infofunction} is the expected reduction of entropy following policy $\pi,$ which corresponds to the mutual information between the selected action and the state, multiplied by the discount factor $\gamma$. Then, by building a bound on the cumulative discounted cost incurred by a policy $\pi$, we define its \textit{information ratio} as
\begin{align}
    \Psi_\alpha(\pi)=\underset{b}{\text{sup}}\Psi_\alpha(\pi, b), \quad  \Psi_\alpha(\pi, b)=\frac{\Delta(b)^{1/\alpha}}{I_\pi(b) ^{1/\alpha-1}}
    \label{eq:inforatioalpha}
\end{align}
where the supremum is taken over all beliefs and $\alpha\in[0,1]$ is a tuning parameter stemming from Young's inequality. It can be shown that for any policy $\pi$ (Appendix \ref{app:ids_bounds}), the total regret satisfies the bound
\begin{equation}
    \text{Regret}(\pi,b)\leq\left(\frac{\Psi_\alpha}{1-\gamma} \right)^\alpha H(b)^{1-\alpha}
\end{equation}
where the only dependence on the policy is in the ratio  $\Psi_\alpha(\pi)$. At each time, the IDS($\alpha$) policy is defined as 
\begin{equation}
    \pi^t_{\mathrm{IDS}(\alpha)}=\underset{\pi}{\text{argmin}}\;\Psi_\alpha^t(\pi)
\end{equation}
The core idea of IDS is to minimize the information ratio over the policy at each step in order to keep it as low as possible, and ultimately get a tighter bound on the regret. The policies can then be tested with different values of $\alpha$, and their performances compared to one another as well as to the optimal policy. Importantly, in some cases, we are able to make such a comparison directly with the exact solution of our bandit problem, allowing for a comprehensive characterization of their performance. Furthermore, the parameter $\alpha$ can be qualitatively interpreted as a balance between exploration and exploitation. Indeed, if $\alpha = 1$, IDS($1$) amounts to minimizing the one-step regret, i.e., to act greedily and exploit the immediate reward. On the other hand, when $\alpha \to 0$, the policy weighs in information, and thus enables the agent to explore actions that it currently believes are sub-optimal to learn more about their reward outcomes.

It should be noted that the information function (Eq.~\eqref{eq:infofunction}) is different from the one that was introduced previously \cite{russo2014learning} for the undiscounted setting, which was simply the mutual information. In the limiting case $\gamma\to1$, however, the information function reduces to the mutual information and the functions coincide. Another argument in favor of the new definition in the discounted setting is that, in the other limit $\gamma\to0$, we have $I_\pi(b)\to \mathrm{const}$. This implies that minimizing the information ratio reduces to minimizing the regret, so that the IDS-policy becomes the greedy policy. This is expected in this case, where only the first reward matters and the optimal strategy aims to maximize the expected reward in this single step.

\section{Symmetric Bernoulli Bandits}
\noindent We now exploit the simplicity of the two-state Bernoulli bandit problems considered here to gauge IDS performance with respect to the optimal policy, which can be accurately computed numerically or even analytically. In this section, we start by presenting the results for the special case of symmetric two-state two-armed bandits, in which the Bernoulli parameters are $\theta_\pm = \theta$ (see Figure \ref{fig:placeholderSchema2armedBB_POMDP}b, table 2). We derive an analytical solution for the optimal value, the $\mathrm{IDS}(\alpha)$ policies, and study the properties of the regret as a function of both $\theta$ and the discount parameter $\gamma$.

 
\subsection{Analytical solution for the optimal value}
\noindent In the symmetric case, the optimal value function can be found analytically. We start from the Bellman optimality equation (Eq.~\eqref{BellmanOptimalityEqPOMDP}) and solve it separately for the belief domains where $a = \pm 1$ are the optimal actions, and then match the two solutions at the decision boundary. Importantly, due to the symmetry, the decision boundary is known a priori and it is given by $\beta_c=0$. This also means that the optimal policy is the greedy policy. For both domains, we make the following ansatz:
\begin{equation}
    v^*(b)=\frac{r_{a}^Tb}{1-\gamma}+C\prod_s b(s)^{\zeta(s)} \;\;\;\;\;\;\text{with}\;\;\;\;\;\; \sum_s\zeta(s)=1
    \label{eq:ansatz}
\end{equation}
where $r_{\pm1}^Tb=\sum_sb(s)r(s,a)=[1 + a \beta (2 \theta -1)] / 2$ corresponds to the expected immediate reward when selecting action $a=\pm1$ at belief $b$. Given the constraint on the exponent, we can write $\zeta(-1)\equiv\zeta$ and $\zeta(1)\equiv1-\zeta$.
The solutions for the optimal value are then\footnote{Throughout the paper we will slightly abuse the notation, meaning $v^*(\beta)\equiv v^*(b(\beta))$.}
\begin{equation}
    v^*(\beta)=
    \begin{cases}
        v^*_+(\beta), & \text{if } \beta \geq 0 \\
        v^*_-(\beta), & \text{if } \beta < 0
    \end{cases}, \; \; \; \; \; \; \; \;
    v^*_\pm(\beta)=\frac{r_{\pm1}^Tb}{1-\gamma}+\underbrace{\frac{\gamma (1 - 2\theta)^2}{(1 - \gamma) \sqrt{1 - 4\gamma^2\theta(1 - \theta)}}}_{C} \frac{(1+\beta)^{\zeta_\pm} (1-\beta)^{1 - \zeta_\pm}}{4}
\end{equation}
where the exponents are given by 
\begin{equation}
    \zeta_{\pm}=\frac{\ln \left( \frac{1 \pm \sqrt{1 - 4\gamma^2\theta + 4\gamma^2\theta^2}}{2\gamma\theta} \right)}
    {\ln(1 - \theta) - \ln\theta} \; .
    \label{eq:exponent}
\end{equation}
After subtracting from the optimal value the one of the underlying MDP, the asymptotic behavior of the regret at $\beta = 0$ is captured by a first-order expansion around $1 - \gamma$ that gives
\begin{equation}
    \mathcal{R^*}(\beta=0)=\frac{1}{2\delta}-c(\theta)(1-\gamma)+o(1-\gamma)^2
    \label{eq: asymptotic_symmetric_regret}
\end{equation}
where $\delta=2\theta-1$ is the bias and the expression for the prefactor $c(\theta)$ is given in Appendix \ref{app:analytics}.

Remark that in this case, when the time horizon of the process goes to infinity, i.e. $\gamma \to 1$, the regret stays finite. This is a known consequence of the fact that in this case the ``confusion set'' of parameters that could be mistakenly taken as optimal is empty
\cite{agrawal1989asymptotically, panaganti2020bounded}. Another way of understanding this result is to note that, in this case, sampling an arm gives information also about the other one -- this is a structured bandit problem. As a result, a finite amount of information needs to be collected in order to distinguish between the bad and the good choice. 
As we will see below, this is not true in general, and in particular when one of the coins is fair, in which case the regret grows logarithmically with the time horizon.

\begin{figure}[t]
    \centering
    \includegraphics[width=\textwidth]{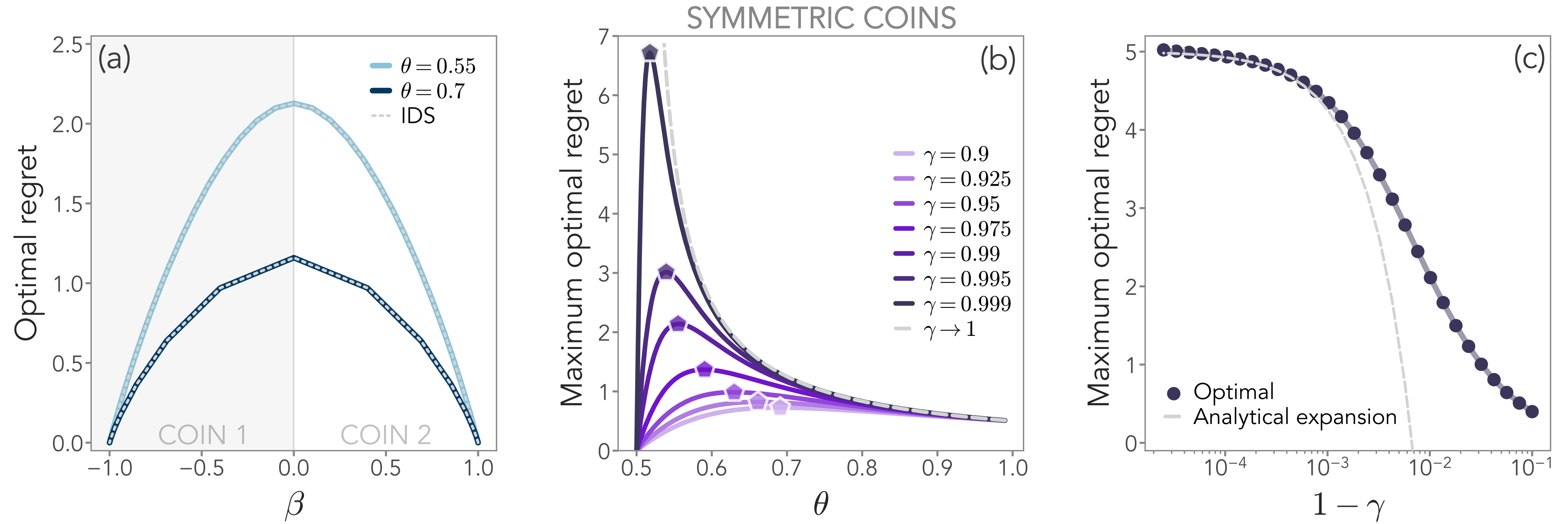}
    \caption{Results for the symmetric bandits case. (a) Optimal regret for two probabilities of winning $\theta=0.55$ and $\theta=0.7$, and the corresponding IDS regrets. IDS achieves optimal performance in this problem. Here, $\gamma=0.99$. (b) Maximum over $\beta$ of the optimal regret as a function of $\theta$ for different values of $\gamma$. The dotted curve corresponds to the asymptotic expansion of the regret (Eq.~\eqref{eq: asymptotic_symmetric_regret} for $\gamma\to1$. (c) Maximum of the optimal regret as $\gamma\to1$, which approaches a constant value. Here, $\theta=0.55$. The dotted line is the corresponding asymptotic expansion. Here, $\theta=0.55$.}
    \label{fig1:placeholder}
\end{figure}

\subsection{Optimal and IDS policies}
\noindent In Figure \ref{fig1:placeholder}a we show the regret for both the optimal policy and IDS. We find that, in these problems, the IDS policy performs optimally. Since, in this symmetric case, the bias $\delta$ is the same for both actions, it can be shown that the expected mutual information between the state $s$ and the action-observation pair is the same for both actions. As a consequence, the denominator of the information ratio (Eq.~\eqref{eq:inforatioalpha}) is also identical. Crucially, this means that IDS only minimizes the one-step regret and thus corresponds to the greedy policy, which is also the optimal one due to the symmetry of the rewards. Therefore, regardless of $\alpha$, IDS always coincides with the optimal policy.

\subsection{Properties of the regret} 
\noindent In the two displayed examples (Figure \ref{fig1:placeholder}a) the regret is higher for $\theta=0.55$. To get a complete overview of the behavior of the regret as a function of $\theta$, we compute the maximum of the regret as a function of $\theta$ and for different values of $\gamma$ (Figure \ref{fig1:placeholder}b). At $\theta = 0.5$, the two actions are identical, and thus the expected regret is always zero. On the other hand, when $\theta=1$, the regret goes to $0.5$ for all $\gamma$, since the belief update (Eq.~\eqref{eq:belief_update}) shows that the belief always collapses to $\pm1$. This means that the agent can deduce the state with certainty after playing only once. Once the belief has collapsed, the agent knows the optimal action and stops accumulating regret, regardless of $\gamma$. This remains approximately true for large $\theta$ as well, where the agent can quickly learn the state and essentially stops gathering regret after just a few steps.

For small $\theta$, instead, gathering information is difficult. On the flip side, there is not much loss in playing sub-optimal actions. As a function of $\theta$, the maximum of the regret peaks at intermediate values, where making decisions is most difficult. As $\gamma$ increases, this peak shifts to the left. Indeed, the weight associated with the discounted rewards decreases geometrically, so that future rewards essentially become negligible after an effective time-horizon $T_{\mathrm{eff}}\sim1/(1-\gamma)$. The difficulty of the problem then depends on whether the agent can distinguish the states on a timescale comparable to the effective horizon.
The time it takes to distinguish the states increases as $\theta$ decreases, which explains why the maxima shift to the left with increasing $\gamma$. In Figure \ref{fig1:placeholder}c, we show the scaling of the maximum regret for the optimal policy as a function of $1 - \gamma$. As the effective time horizon grows, the maximum regret approaches the constant found by expanding the analytical solution around $\gamma=1$ (Eq.~\eqref{eq: asymptotic_symmetric_regret}).

\section{One fair coin}
\noindent We now focus on another particular type of bandit problem, where one coin is fair ($\theta_- = 0.5$) and the other one is not ($\theta_+ \ge 0.5$, see Figure \ref{fig:placeholderSchema2armedBB_POMDP}b, table 3). As in the previous section, we first study Bellman's optimality equation and derive an approximate analytical solution. We compare the corresponding policy with those derived from IDS with $\alpha = 0$, which is now suboptimal but with a regret of the same order as the optimal one.


\subsection{Analytical solution}
\noindent In the one fair coin case, the decision boundary is not known a priori. However, since playing the fair coin does not provide information about the system, we must have that $\beta_c \le 0$. Therefore, an optimal agent will not always play greedily -- which corresponds to $\beta_c = 0$ as in the previous case -- given that 
it can be advantageous to choose the other coin even when the fair coin is believed to be more rewarding.

On the left side of the decision boundary, where the argmax of the Bellman equation is given by the fair coin ($a=-1$), the value function is constant. Since the expected reward is always $1/2$, we must have $v_-^*(\beta)=\frac{1}{2}(1-\gamma)$. On the right side, we make the same ansatz as before (Eq.~\eqref{eq:ansatz}) and obtain the piecewise approximate solution
\begin{equation}v^*(\beta)=
\begin{cases}
    \frac{1+\beta(2\theta-1)}{2(1-\gamma)}-\frac{\beta_c(2\theta-1)}{2(1-\gamma)}\frac{(1-\beta)^{\zeta_-}(1+\beta)^{1-\zeta_-}}{(1-\beta_c)^{\zeta_-}(1+\beta_c)^{1-\zeta_-}} , \; &\beta\geq \beta_c\\
    \frac{1}{2}(1-\gamma), \; &\beta<\beta_c
\end{cases}  
\end{equation}
where the $\zeta_-$ exponent is the same as before (Eq.~\eqref{eq:exponent}) and the decision boundary is given by
\begin{equation}
    \beta_c=-\frac{1}{2\zeta_--1} \; .
    \label{eq:db_fair_coin}
\end{equation}
The value of the optimal MDP is given by
\begin{equation}
    \bar{v}(\beta)=\frac{1+\beta}{2}\frac{\theta_+}{1-\gamma}+\frac{1-\beta}{2}\frac{1}{2}(1-\gamma)
\end{equation}
so that we have an explicit approximate expression for the regret as well. Since $\beta_c<0$, we evaluate the right branch of the solution and expand around $\gamma=1$ to obtain
\begin{equation}
    \mathcal{R}(\beta=0)=c_1+c_2\log(1-\gamma)+c_3(1-\gamma)\log^2(1-\gamma)+c_4(1-\gamma)\log(1-\gamma)+c_5(1-\gamma)
    \label{eq:analytical_development_faircoin}
\end{equation}
which is the asymptotic scaling of the regret at $\beta=0$. Contrary to the previous case of symmetric coins, the regret now scales logarithmically with $1-\gamma$ rather than going to a constant.

This dependence on the regret matches the asymptotic lower bound by Lai and Robbins \cite{lai1985asymptotically}.
\begin{figure}[t]
    \centering
    \includegraphics[width=\textwidth]{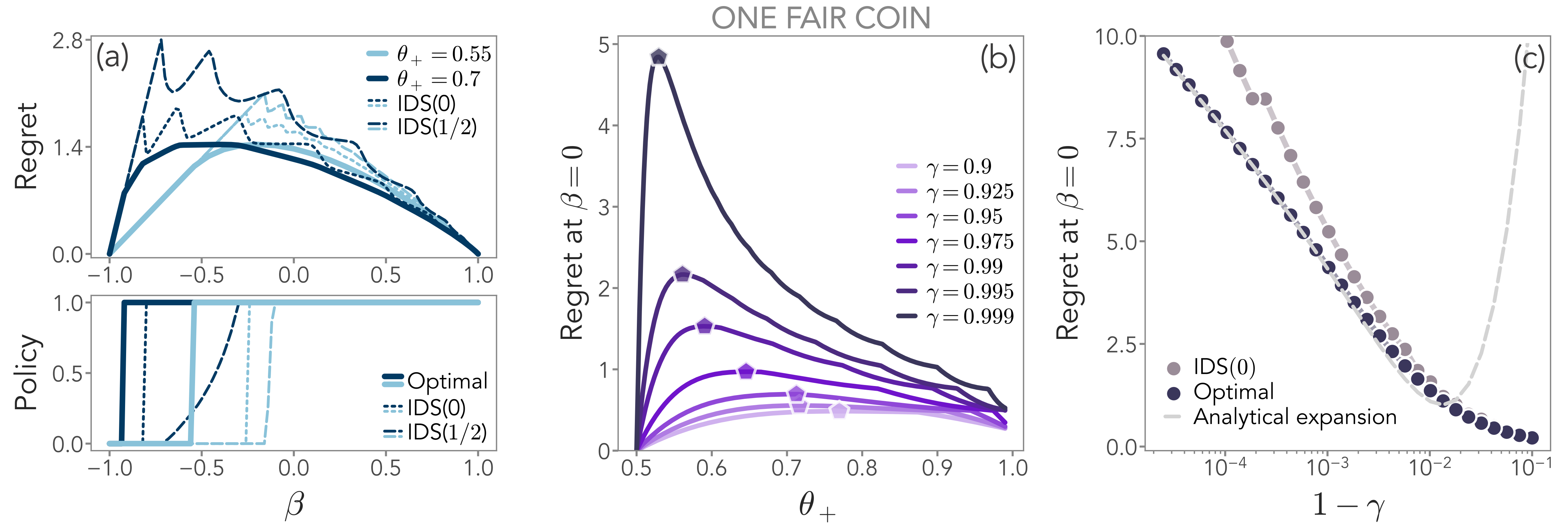}
    \caption{Results for the one fair coin case. (a) Regret and policy for the optimal agent and IDS for two probabilities of winning of the unfair coin, $\theta_+=0.55$ and $\theta_+=0.7$. IDS($0$) performs better than IDS$(1/2$), since its decision boundary is closer to the optimal one. Here, $\gamma=0.99$.(b) Optimal regret at $\beta = 0$ as a function of $\theta_+$ for different values of $\gamma$. (c) Regret at $\beta = 0$ as $\gamma \to 1$, which diverges logarithmically. IDS($0$) behaves similarly to the optimal policy. The dotted line is the corresponding asymptotic expansion. Here, $\theta_+=0.55$. }
    \label{fig2:placeholder}
\end{figure}

\subsection{Optimal and IDS($0$) policies}
\noindent In Figure \ref{fig2:placeholder}a we show both the regret and policy of the optimal agent and for IDS. In general, we find that the IDS policy is too greedy, while the optimal one favors exploration more. However, IDS with $\alpha = 0$ outperforms the one with $\alpha = 1/2$, placing the decision boundary closer to the optimal one. Therefore, for the rest of this section, we will focus exclusively on $\mathrm{IDS}(0)$. 

In the region where the policies select the fair coin, the value is constant, and therefore the regret increases linearly in both cases, but the IDS regret continues to do so between the decision boundaries, making it suboptimal. At the decision boundary of IDS, there is a sharp decrease in the IDS regret. This gap between the decision boundaries leads to a sawtooth behavior for the IDS regret. The next saw tooth can be understood as the set of beliefs which can be mapped to the region between the decision boundaries through the belief update equation (Eq.~\eqref{eq:belief_update}). This can also be seen from the Bellman equation (Eq.~\eqref{BellmanEqPOMDP}) as the value at each belief is equal to a sum containing the value at a smaller belief. Therefore, these wiggles can be interpreted as a propagation of the sub-optimality of IDS between the decision boundaries, and follow from the fact that the value of each belief depends on the value of neighboring ones.

\subsection{Properties of the regret} 
\noindent In Figure \ref{fig2:placeholder}b, we study the regret at $\beta=0$ as a function of $\theta_+$ and for different values of $\gamma$. Overall, the regret behaves similarly to the symmetric case (Figure \ref{fig1:placeholder}b), going to zero as $\theta_+\to0.5$. The location of the maxima tends to shift to the left as $\gamma$ increases for the same reasons as before. 

Since the scaling with $1-\gamma$ is now logarithmic, the regret diverges asymptotically (Figure \ref{fig2:placeholder}c). The expansion in Eq.~\eqref{eq:analytical_development_faircoin} proves to be a good approximation even at relatively large values of $1-\gamma$. Importantly, the regret obtained with the $\mathrm{IDS}(0)$ policy is of the same order as the optimal one, even if the policy is sub-optimal, showing that IDS is able to capture this fundamental feature of the problem. 

\section{Asymmetric problem and performance of IDS($\alpha$)}
\noindent We now consider the case where the Bernoulli parameters $\theta_\pm$ are different (table 3 of Figure \ref{fig:placeholderSchema2armedBB_POMDP}b) and evaluate how  IDS performance varies with $\alpha$, which is, in general, a free parameter of our heuristic policy. To quantify it, we compute the maximum relative distance to the optimal regret, $\mathcal{R}^*(\beta)$:
 \begin{equation}
     \Delta \mathcal{R}(\alpha) = \max_\beta\left[\frac{\mathcal{R}_{\mathrm{IDS}(\alpha)}(\beta)-\mathcal{R}^*(\beta)}{\mathcal{R}^*(\beta)}\right]
 \end{equation}
which, in general, will depend on the specific problem parameters $(\theta_-, \theta_+)$, as we have already seen for the symmetric and the one fair coin case.
 
 \begin{figure}[t]
    \centering
    \includegraphics[width=\linewidth]{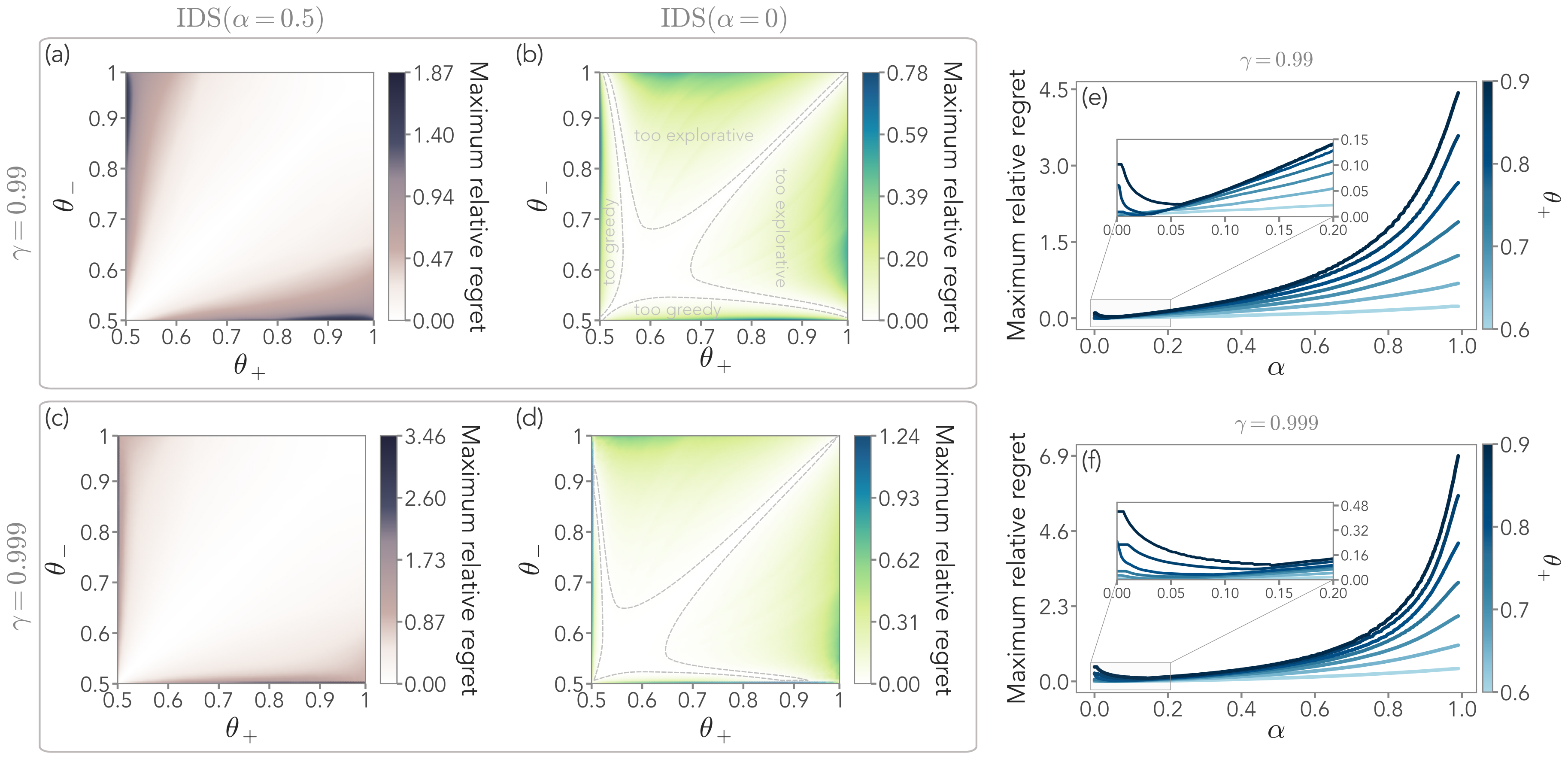}
    \caption{Performance of IDS($0$) and IDS($1/2$) in the asymmetric bandit problem. (a-b) Maximum relative distance to the optimal regret ($\Delta \mathcal{R}(\alpha)$) for IDS with $\alpha = 1/2$ (a) and $\alpha = 0$ (b), with $\gamma = 0.99$. The latter case performs better overall, although there are regions where the IDS($0$) policy is either too greedy or too explorative compared to the optimal one. The dashed lines correspond to the curves where $\Delta \mathcal{R}(\alpha)$ crosses a threshold of $10^{-2}$. Note that the range of $\Delta \mathcal{R}(\alpha)$ is smaller for IDS($0$). (c-d) Same, but for $\gamma = 0.999$. With respect to the previous case, the problem becomes harder, and the region where IDS($0$) is near-optimal shrinks and moves towards the axes. (e-f) For a specific choice of parameters (here $\theta_- = 0.55$), there exists an optimal value of $\alpha$ that minimizes the maximum relative regret.}
    \label{fig:figure3_placeholder}
\end{figure}

\subsubsection{Comparison between IDS$(0)$ and IDS$(1/2)$}
\noindent We begin by comparing $\Delta \mathcal{R}(\alpha)$ for $\alpha = 0$ with the standard choice of $\alpha = 1/2$ \cite{russo2014learning}. We show the maximum relative distance to the optimal regret for these two cases in Figure \ref{fig:figure3_placeholder}a-b for $1-\gamma = 10^{-2}$. As expected, both choices lead to a vanishing relative regret along the diagonal, the symmetric problem, while the worst performance is achieved on the axes corresponding to the fair coin case. In general, however, we find that IDS($0$) outperforms IDS($1/2$) in most of $(\theta_-, \theta_+)$ parameter space.

In particular, for $\alpha=1/2$, the IDS performance becomes gradually worse as we move from the diagonal to the axes, i.e., as the difference between the biases of the two coins increases. For $\alpha=0$, instead, the structure is different. We find that there is a region corresponding to nontrivial combinations of $\theta_\pm$ where performance is again optimal. Above this, performance slightly worsens and the maximum relative regret shows two local maxima around $\theta_\pm\to1, \theta_\mp \in [0.6,0.7]$, where IDS($1/2$) is the better choice. Nevertheless, samples of specific instances show that $\alpha=1/2$ systematically produces policies that are too greedy. With $\alpha=0$, on the other hand, the curve where the performance is optimal separates the space into the region above where it becomes too explorative and the region below where it is too greedy. When $\gamma$ is increased, this near-optimal region tends to shrink towards the axes -- however, since the axes correspond to the one fair coin case, it never touches them. Although for larger $\gamma$ the problem naturally becomes harder, IDS($0$) often remains advantageous with respect to the standard IDS($1/2$).

\subsubsection{Optimal $\alpha$ for different $\theta\pm$}
\noindent In general, we find that there exists an optimal $\alpha$ for a given pair of $(\theta_-, \theta_+)$. Indeed, although IDS can never lead to exploration-only policies because of the numerator in Eq.~\eqref{eq:inforatioalpha}, heuristically we can think of $\alpha$ as balancing between more greedy policies (large $\alpha$) and more explorative ones (small $\alpha$).

Figures \ref{fig:figure3_placeholder}e-f show $\Delta \mathcal{R}(\alpha)$ as a function of $\alpha$ and for two different values of $\gamma$. In general, we find that a smaller $\alpha$ tends to yield better performance. However, the relative regret does not always decrease monotonously with $\alpha$. In particular, when $\gamma$ increases, the minima of the regret for the same $\theta_\pm$ may be shifted to higher values $\alpha$. When this happens, it stems from IDS($0$) becoming too explorative as the effective horizon increases and information for future reward becomes more valuable. This is due to the fact that IDS is defined from the one-step information ratio, whereas the optimal policy is computed via the entire Bellman equation, resulting in different sensitivities to $\gamma$.

\section{Discussion}
\noindent In this work, we focused on a two-state Bernoulli bandit problem, where an agent must choose between two coins with different winning probabilities $\theta_\pm$, to study how the tradeoff between exploration and exploitation is handled in optimal decision making. To this end, based on previous results \cite{russo2014learning}, we introduced IDS($\alpha$), a class of heuristic strategies to explicitly balance the immediate regret and information gain at each step. In particular, we modified the IDS information measure to be better suited for the discounted infinite horizon setting, and introduced an explicit tuning parameter $\alpha$ that affects the relative importance of exploration and exploitation and that allowed us to derive a corresponding bound on the total regret. We first studied two special cases of the two-state problem. In the symmetric case, where the Bernoulli parameters of the reward distribution -- i.e., the winning probability of the two coins -- are identical, the greedy policy is optimal and IDS correctly recovered it as the expected information gain does not depend on the action in this case. In the fair coin case, where one coin is fair and the other is not, IDS($0$) performed sub-optimally but produced regrets of the order of the optimal one. Furthermore, in both these cases, we were able to derive either exact or approximate solutions to the Bellman's equation for the optimal policy, showing that in the symmetric problem, the regret converges to a constant as $\gamma \to 1$, whereas it diverges logarithmically for the fair coin one. 

Among the open questions, it would be interesting to understand the dependence of the optimal
$\alpha$  on $\theta_{\pm}$ and $\gamma$ and interpret it in terms of the balance between exploitation and exploration.

Several extensions of IDS are possible. A key assumption in Eq.~\eqref{infofunction} is that information is defined by means of the Shannon entropy, but more general formulations could be obtained by using other measures of uncertainty, such as the Rényi entropy \cite{renyi1961measures,zimmert2021tsallis}. 
The potential of IDS goes far beyond the simple bandit problems discussed here. Very similar results can be obtained for the much wider class of POMDP where the actions of the agent do not affect the environment, for example non-stationary bandits. The full case with feedback by the agent on the environment introduces a new level of complexity and calls for a different definition of information gain. The study of these problems in minimal environments like the two-state one can help shed light on the intimate connection between information-theory and decision-making.

\clearpage
\appendix

\section{Value Iteration}
\label{app:value_iteration}
\noindent In this Appendix, we present a standard algorithm to solve the Bellman equation known as value iteration. For illustration, we describe the algorithm for a stationary problem with one dimensional belief $b(s)$ parametrized by a scalar $\beta$, e.g., $b(s) = (1 + s \beta) / 2$, although the algorithm and its convergence can be generalized to higher dimensional beliefs and to non-stationary settings.

\textcolor{magenta}{
}
\\ \\
Let us define the Bellman operator $\mathsf{B}$ which acts nonlinearly on functions as
\begin{equation}
    \mathsf{B} W (\beta) = \max_a\left[r_a^Tb\,+\,\gamma  \sum_yp_b(y|a)W(\beta'_{ay})\right]
\end{equation}
where $\beta$ is the scalar parameterizing the belief, and $r_a^Tb$ is the expected immediate reward,
\begin{equation}
    r_a^Tb=\sum_s b(s)r(s,a)=\sum_s b(s)\mathbb{E}[R|s,a] \; ,
\end{equation}
and $p_b(y|a)$ is the probability to get observation $y$ when taking action $a$ at belief $b$:
\begin{equation}
    p_b(y|a)=\sum_s b(s)p(y|s,a)
\end{equation}
so that the Bellman equation is written as
\begin{equation}
    \mathsf{B}V^*=V^* \; .
\end{equation}
The value iteration algorithm aims at finding successive approximations of the optimal value function by applying the Bellman operator repeatedly, starting from an initial guess of the value function $V_0$, and is described in algorithm \ref{alg:vi}.

We now show that value iteration converges linearly in the max-norm, with coefficient $\gamma$. We recall that the $\infty$-norm of a function $f(\beta)$ is defined as
\begin{equation}
    ||f||_\infty = \sup_\beta |f(\beta)|
\end{equation}
and for two vectors $x, y$, we have the identity
\begin{equation}
    |\max_i x_i - \max_i y_i| \le \max_i |x_i-y_i| \; .
\end{equation}
As a first step, we prove that $\mathsf{B}$ is contracting in the simplex with contraction coefficient $\gamma<1$. Indeed
\begin{align}
    ||\mathsf{B}U-\mathsf{B}W||_\infty & = \sup_b \left|\max_a \left[r_a^Tb\,+\,\gamma  \sum_yp_b(y|a)U(\beta'_{ay})\right] -\max_a \left[r_a^Tb\,+\,\gamma  \sum_yp_b(y|a)W(\beta'_{ay})\right] \right| \nonumber \\
    & \le \gamma \sup_b \max_a \left|\sum_yp_b(y|a)(U(\beta'_{ay})- W(\beta'_{ay}))\right| \nonumber \\
    & \le \gamma \sup_b\max_{a} \sum_yp_b(y|a)\left|U(\beta'_{ay})- W(\beta'_{ay})\right| \nonumber \\
    & \le \gamma \sup_b\max_{a} \sum_yp_b(y|a)||U-W||_\infty =\gamma ||U-W||_\infty \; .
\end{align}
From contractivity, it follows that
\begin{equation}
    ||V_{k+1}-V^*||_\infty =||\mathsf{B}V_{k}-\mathsf{B}V^*||_\infty \le \gamma ||V_k-V^*||_\infty
\end{equation}
which, upon iteration, gives
\begin{equation}
    ||\mathsf{B}V_k - V^*||_\infty \le \gamma^k ||V_0-V^*||_\infty
\end{equation}
proving the linear convergence of value iteration with coefficient $\gamma$, for any initial guess $V_0$.

\begin{center}
\begin{minipage}{0.5\textwidth}
\begin{algorithm}[H]
\caption{Value iteration}\label{alg:vi}
\begin{algorithmic}
\State $V_0,\delta>0$\Comment{Initialization}
\Repeat
\State $V_{k+1}  = \mathsf{B}V_k$ \Comment{Iteration}
\Until $||V_{k+1}-V_k||_\infty <\delta$ \Comment{Termination}
\end{algorithmic}
\end{algorithm}
\end{minipage}
\end{center}

\section{Derivation of the regret bound for IDS}
\label{app:ids_bounds}
\noindent IDS was originally introduced in \cite{russo2014learning}, where a bound on the total regret was derived in the undiscounted finite horizon setting. However, if we use the mutual information directly as the denominator of the information ratio, this specific bound on the regret does not hold in the infinite-horizon discounted case. This motivates the introduction of a new information function better suited to this setting, for which a similar bound can be found. Furthermore, the following derivation uses a generalized information ratio containing a tuning parameter $\alpha$. This has previously been introduced \cite{lattimore2021mirror} and shown to produce an $\alpha-$dependent bound. The expression of the information function given here can generally be used to obtain a bound that holds also for non-stationary problems, as long as there is no feedback on the environment -- i.e., the selection of the action does not affect the transition probabilities between states of the environment. Therefore, in general, in this Appendix, we will not assume stationarity.

The total regret of a policy $\pi$ is defined as 
\begin{align}            
\mathcal{R}(\pi,b)&=\bar{v}(b)-v_\pi(b)
\end{align}
where the optimal return is given by
\begin{gather}
    \bar{v}(b) =\sum_s b(s) V_{\mathrm{MDP}}^*(s), \\
    V_{\mathrm{MDP}}^*(s)=\underset{a}{\text{max}} \;Q_{\mathrm{MDP}}^*(s,a)=\underset{a}{\text{max}}\left\{r_a(s)+\gamma\sum_{s'}T(s'|s)V_{\mathrm{MDP}}^*(s')\right\}
\end{gather}
where $Q_{\mathrm{MDP}}^*(s,a)$ is the optimal state-action value function, which corresponds to the value if the agent selects action $a$ and subsequently acts optimally. 

The value of the policy $\pi$ satisfies the Bellman equation (Eq.~\eqref{BellmanEqPOMDP}), i.e.,
\begin{align}
    v_\pi(b)&=\sum_{a}\pi(a|b)\left[r_a^Tb+\gamma \sum_y p_b(y|a)v_\pi(b_{ay}')\right]
\end{align}
where $p_b(y|a) = \sum_s b(s) p(y|s,a)$ is the probability of receiving an observation $y$ following action $a$ given the belief $b$ about the state of the environment, and $r_a^Tb$ is the expected immediate reward,
\begin{equation}
    r_a^Tb=\sum_s b(s)r(s,a)=\sum_s b(s)\mathbb{E}[R|s,a] \; ,
\end{equation} 
Thus,
\begin{align}
    v_\pi(b)&= \bar{v}(b)-\mathcal{R}(\pi,b) \notag \\
    v_\pi(b)&=\sum_a\pi(a|b)\left[ r_a^Tb+\gamma\sum_y p_b(y|a)[\bar{v}(b_{ay}')-\mathcal{R}(\pi,b_{ay}')]\right]
\end{align}
and substituting this back in the definition of the regret leads to
\begin{align}    
    \mathcal{R}(\pi,b)&=\bar{v}(b)-v_\pi(b) \notag \\
    &=\sum_a\pi(a|b)\left[\bar{v}(b)-(r_a^Tb+\gamma\sum_yp_b(y|a)[\bar{v}(b_{ay}')-\mathcal{R}(\pi,b_{ay}')])\right] \notag \\
    &=\sum_a \pi(a|b) \left[ \bar{v}(b)-[r_a^Tb+\gamma\sum_yp_b(y|a)\bar{v}(b_{ay}')]\right]+\gamma\sum_y p_b(y|a) \mathcal{R}(\pi,b_{ay}') \notag \\
    &=\Delta(\pi,b)+\gamma\sum_{ay}\pi(a|b)p_b(y|a)\mathcal{R}(\pi,b_{ay}')
\end{align}
where the quantity $\Delta(\pi,b):=\sum_a\pi(a|b)b^T(V^*-Q_a^*)\geq0$ is called the \textit{sub-optimality gap}. This gives a recursive relation expressing the regret as a discounted sum of one-step costs $\Delta(\pi,b)$. 

Let now $C_f(\pi,b)$ be a generic cumulative discounted cost incurred by the policy $\pi$ when the one step costs are $f(\pi,b)$:
\begin{equation}
    C_f(\pi,b)=f(\pi,b)+\gamma\sum_{ay}\pi(a|b)p_b(y|a)C_f(\pi,b_{ay}')
\end{equation}
with $0<\gamma<1$. Then, the following lemma holds.
\begin{lemma}
For any pair of immediate one step costs $f\geq0$, $g\geq0$ and $0<\alpha<1$,
\begin{equation}
    C_f(\pi,b)\leq C_{f^{1/\alpha}g^{1-1/\alpha}}(\pi,b)^\alpha C_g(\pi,b)^{1-\alpha}
\end{equation}
\end{lemma}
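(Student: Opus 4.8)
The plan is to prove the inequality by unrolling both sides into explicit sums over finite histories and then applying Hölder's inequality term by term. First I would introduce notation for histories: for a fixed starting belief $b$, a depth-$n$ history is a sequence $h_n = (a_0,y_0,a_1,y_1,\dots,a_{n-1},y_{n-1})$, and I would write $P_\pi(h_n)=\prod_{k=0}^{n-1}\pi(a_k|b_k)\,p_{b_k}(y_k|a_k)$ for its probability under $\pi$, where $b_0=b$ and $b_{k+1}$ is the Bayes update of $b_k$ under $(a_k,y_k)$. Iterating the defining recursion $C_f(\pi,b)=f(\pi,b)+\gamma\sum_{ay}\pi(a|b)p_b(y|a)C_f(\pi,b'_{ay})$ and using $0<\gamma<1$ together with boundedness of the costs to kill the tail, one gets the closed form
\begin{equation}
    C_f(\pi,b)=\sum_{n=0}^\infty \gamma^n \sum_{h_n} P_\pi(h_n)\, f(\pi,b_n(h_n)) \; ,
\end{equation}
and likewise for $C_g$ and for $C_{f^{1/\alpha}g^{1-1/\alpha}}$. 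This reduces the lemma to a statement about a single nonnegative measure on the (countable, $\sigma$-finite) index set of all pairs $(n,h_n)$.

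Next I would set up Hölder with the conjugate exponents $p=1/\alpha$ and $q=1/(1-\alpha)$, which satisfy $1/p+1/q=1$ precisely because $0<\alpha<1$. The trick is to distribute the weight $\gamma^n P_\pi(h_n)$ between the two factors: write $\gamma^n P_\pi(h_n) f = \big(\gamma^n P_\pi(h_n)\big)^{1/p}\big(\gamma^n P_\pi(h_n)\big)^{1/q} f$, and then split $f = f \cdot 1$ cleverly. Concretely, set $u(n,h_n)=\big(\gamma^n P_\pi(h_n)\big)^{1/p} f^{1/\alpha}g^{1-1/\alpha}/g^{1-1/\alpha} \cdot (\dots)$ — more cleanly, define
\begin{equation}
    A(n,h_n)=\gamma^n P_\pi(h_n)\, f(\pi,b_n)^{1/\alpha} g(\pi,b_n)^{1-1/\alpha}, \qquad B(n,h_n)=\gamma^n P_\pi(h_n)\, g(\pi,b_n) \; ,
\end{equation}
and check the pointwise identity $\gamma^n P_\pi(h_n) f(\pi,b_n) = A(n,h_n)^{\alpha} B(n,h_n)^{1-\alpha}$, which holds because the exponents of $f$ and $g$ and of the weight $\gamma^nP_\pi(h_n)$ all match up (this is just $f = (f^{1/\alpha}g^{1-1/\alpha})^\alpha g^{1-\alpha}$ on the cost and $1 = 1^\alpha 1^{1-\alpha}$ on the weight). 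Summing over $(n,h_n)$ and applying Hölder's inequality $\sum A^\alpha B^{1-\alpha}\le (\sum A)^\alpha (\sum B)^{1-\alpha}$ then yields exactly $C_f(\pi,b)\le C_{f^{1/\alpha}g^{1-1/\alpha}}(\pi,b)^\alpha\, C_g(\pi,b)^{1-\alpha}$.

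One should handle the degenerate cases separately: if $C_g(\pi,b)=0$ then $g\equiv 0$ along every reachable history, hence $f^{1/\alpha}g^{1-1/\alpha}\equiv 0$ too (here one uses the convention $0^{1-1/\alpha}=0$ since $1-1/\alpha<0$ would be problematic — so one adopts $0\cdot\infty=0$, or equivalently restricts the sums to histories where $g>0$, noting $f$ is dominated there anyway via a limiting argument), and the bound is trivial; similarly if $C_{f^{1/\alpha}g^{1-1/\alpha}}(\pi,b)=0$. The main obstacle I anticipate is precisely this bookkeeping around the negative exponent $1-1/\alpha$ and the associated $0/0$ and $\infty$ conventions: one must argue that the pointwise identity $\gamma^nP_\pi(h_n)f = A^\alpha B^{1-\alpha}$ still holds (or can be replaced by $\le$) at histories where $g=0$, which forces either $f=0$ there or an unbounded contribution, and one needs the cost functions to be bounded (which they are, since $f=\Delta$ and $g=I_\pi$ are built from bounded quantities like reward gaps and entropies) to justify both the interchange of sum and limit in deriving the closed form and the finiteness needed for Hölder. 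Once those conventions are fixed, the argument is a one-line Hölder estimate.
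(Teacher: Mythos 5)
Your proof is correct, but it is organized differently from the paper's. The paper never unrolls the recursion: it applies Young's inequality pointwise in the form $uv\leq\alpha u^{1/\alpha}+(1-\alpha)v^{1/(1-\alpha)}$ with the choices $u=\lambda^{\alpha-1}g^{\alpha-1}f$, $v=\lambda^{1-\alpha}g^{1-\alpha}$, obtaining the one-step bound $f\leq\alpha\lambda^{1-1/\alpha}f^{1/\alpha}g^{1-1/\alpha}+(1-\alpha)\lambda g$ for every $\lambda>0$; it then uses the linearity (and implicit monotonicity) of the map $f\mapsto C_f$ to lift this to $C_f\leq\alpha\lambda^{1-1/\alpha}C_{f^{1/\alpha}g^{1-1/\alpha}}+(1-\alpha)\lambda C_g$, and finally minimizes over the single global parameter $\lambda$. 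Your route instead makes the positive measure $\gamma^nP_\pi(h_n)$ on histories explicit and invokes H\"older's inequality $\sum A^\alpha B^{1-\alpha}\leq(\sum A)^\alpha(\sum B)^{1-\alpha}$ directly; since H\"older is standardly proved by exactly the Young-plus-optimization step the paper performs, the two arguments are the same inequality wearing different clothes. What your version buys is transparency about where the inequality lives (it is H\"older on a $\sigma$-finite measure space indexed by $(n,h_n)$) at the cost of having to justify the series representation and its convergence; what the paper's version buys is that it operates entirely at the level of the recursion, needing only that $C$ is linear and order-preserving in the one-step cost, which generalizes more gracefully if the closed form is awkward to write down. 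Your attention to the $g=0$ degeneracy and the negative exponent $1-1/\alpha$ is a genuine point that the paper glosses over (the same issue arises in its choice of $u$, which divides by $g^{1-\alpha}$), so that discussion is a worthwhile addition rather than a detour.
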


\begin{proof}
The proof of this Lemma is an application of Young's inequality, 
\begin{equation}
    uv\leq\alpha u^{1/\alpha}+(1-\alpha)v^{1/(1-\alpha)}
\end{equation}
for any $u,v\in\mathbb{R}_+$. Choosing $u=\lambda^{\alpha-1}g(\pi,b)^{\alpha-1}f(\pi,b)$ and $v=\lambda^{1-\alpha}g(\pi,b)^{1-\alpha}$ gives
\begin{equation}
    f(\pi,b)\leq\alpha\lambda^{1-1/\alpha}g(\pi,b)^{1-1/\alpha}f(\pi,b)^{1/\alpha}+(1-\alpha)\lambda g(\pi,b)
\end{equation}
for all $\lambda>0$. Thanks to the linearity of the recursion equations in $C$, this implies for any $\pi$ and $b$
\begin{equation}
    C_f(\pi,b)\leq\alpha\lambda^{1-1/\alpha}C_{f^{1/\alpha}g^{1-1/\alpha}}(\pi,b)+(1-\alpha)\lambda C_g(\pi,b) \; .
\end{equation}
Taking the minimum over $\lambda$ on the right-hand side, one obtains the result.
\end{proof}

Consider now a positive function $\Theta(\pi,b)>0$, which for us is going to represent a generic measure of information. Then, the information ratio of parameter $\alpha$ and its supremum are defined by
\begin{align}
    \Psi_\alpha(\pi, b)&=\frac{\Delta(\pi, b)^{1/\alpha}}{\Theta(\pi, b)^{1/\alpha-1}}, \quad
    \Psi_\alpha(\pi)=\underset{b}{\text{sup}}\Psi_\alpha(\pi, b) \; .
    \label{inforatioalpha}
\end{align}
Taking $f=\Delta$ (the one-step regret) and $g=\Theta$ in the lemma gives an upper bound on the regret
\begin{equation}
    \mathcal{R}(\pi,b)=C_\Delta\leq\left(C_{\Delta^{1/\alpha} \Theta^{1-1/\alpha}}\right)^\alpha C_\Theta^{1-\alpha}\leq\left(\frac{1}{1-\gamma}\Psi_\alpha\right)^\alpha C_\Theta^{1-\alpha}
    \label{regretboundalpha}
\end{equation}
by definition of $\Psi_\alpha(\pi)$. For the bound to depend on the policy only through this ratio, one has to find an information function for which $C_\Theta$ is independent of the policy. Let $H(b)$ be the Shannon entropy of the belief distribution $b$. We have
\begin{equation}
    \Theta(\pi,b)=(1-\gamma)H(b)+\gamma\sum_{ay}\pi(a|b)p_b(y|a)D_{-H}(b_{ay}',\hat{b}_a)\geq0
\end{equation}
where $D_{-H}$ is the Bregman divergence associated with $H$,
\begin{equation}
    D_{-H}(p,q)=-H(p)+H(q)+\langle\nabla H(q),p-q\rangle \; .
\end{equation}
The second term then becomes
\begin{align}
    &\gamma\sum_{ay}\pi(a|b)p_b(y|a)\left[-H(b_{ay}')+H(\hat{b}_a)+\langle\nabla H(\hat{b}_a),b_{ay}'-\hat{b}_a\rangle \right] \notag \\
    =\;&\gamma\sum_{ay}\pi(a|b)p_b(y|a)\left[H(\hat{b}_a)-H(b_{ay}')\right] +\gamma\sum_a\pi(a|b)\langle\nabla H(\hat{b}_a),\sum_yp_b(y|a)b_{ay}'-\hat{b}_a\rangle \notag \\
    =\;&\gamma\sum_{ay}\pi(a|b)p_b(y|a)\left[H(\hat{b}_a)-H(b_{ay}')\right]
\end{align}
where the second term vanishes because, if we recall that
\begin{equation}
    b_{ay}'(s')=p_b(s'|y,a)=\frac{\sum_s p(s'y|s,a)b(s)}{\sum_s p(y|s,a)b(s)}, \quad p_b(y|a)=\sum_s p(y|s,a)b(s)
\end{equation}
then
\begin{align}
\sum_y p_b(y|a) b_{ay}'=\sum_y \sum_s p(s'y|s,a)b(s) =\sum_s p(s'|s,a)b(s)=\hat{b}_a(s') \; .
\end{align}
Overall, we arrive at 
\begin{align}
    \Theta(\pi,b)&=(1-\gamma)h(b)+\gamma\sum_{ay}\pi(a|b)p_b(y|a)\left[H(\hat{b}_a)-H(b_{ay}')\right] \notag\\
    &= \overbrace{\gamma\sum_a \pi(a|b)\left(H(\hat{b}_a)-H(b)\right)}^{k(b)} +  \overbrace{H(b)-\gamma\sum_{ay} \pi(a|b)p_b (y|a)H(b_{ay}')}^{g(b)} \; .
    \label{infofunction}
\end{align}
In the cumulative sum $C_\Theta (\pi,b)$, the second part $g(b)= H(b)-\gamma\sum_{ay} \pi(a|b)p_b (y|a)H(b_{ay}')$ is a telescoping sum:
\begin{align}
    C_g(b)&=C_H(b)-\gamma\sum_{ay}\pi(a|b)p_b(y|a)C_H(b_{ay}') \notag \\
    C_H(b)&=H(b)+\gamma\sum_{ay}\pi(a|b)p_b(y|a)C_H(b_{ay}') \notag \\
    &\Rightarrow \; C_g(b)=H(b).
\end{align}
where the linearity of the cumulative discounted cost was used in the first equality. 

In the stationary case, we have $\hat{b}_a=b$ so that the first term $k(\pi,b)$ in $\Theta(\pi,b)$ vanishes and the bound in Eq.~\eqref{regretboundalpha} gives 
\begin{equation}
    \text{Regret}(\pi,b)\leq\left(\frac{\Psi_\alpha}{1-\gamma} \right)^\alpha H_0^{1-\alpha}
\end{equation}
where $H_0=H(b)$ is the initial entropy. 

Finally, the case $\alpha=0$ requires special attention as the exponent in the information ratio, Eq.~\eqref{inforatioalpha}, diverges. To avoid issues in the implementation of the IDS policy, we can raise the ratio to the positive exponent $\frac{\alpha}{1-\alpha}$, giving the equivalent minimization problem
\begin{equation}
    \pi_\mathrm{IDS}(b):=\underset{\pi}{\text{argmin}}\; \frac{\Delta^{\frac{1}{1-\alpha}}(\pi,b)}{\Theta(\pi,b)} \; .
\end{equation}
The bound $\alpha=0$ will be compared to the regrets obtained with this policy. To calculate it, we use the original information ratio, giving the bound
\begin{equation}
\text{Regret}(\pi,b)\leq\left(\frac{\Psi_\alpha}{1-\gamma} \right)^\alpha H_0^{1-\alpha}=\underset{b}{\text{sup}}\frac{\Delta(\pi,b)}{\Theta(\pi,b)}H_0
\label{eq:alpha0bound}
\end{equation}
which can be implemented immediately.

\section{Analytical derivations}
\label{app:analytics}
\subsection*{Symmetric Case}
\noindent In the following, $b\equiv b(s)$ is used to denote the belief distribution, while $\beta$ is its parametrization in the two-state problem (Eq.~\eqref{eq:belief_parametrization}). We recall that the optimal value function is defined by
\begin{align}
    v^*(b)&=\max_a\Biggl\{r_a^Tb\,+\,\gamma  \sum_yp_b(y|a)v^*(b'_{ay})\Biggr\}
\end{align}
where $r_a^Tb$ is the expected immediate reward,
\begin{equation}
    r_a^Tb=\sum_s b(s)r(s,a)=\sum_s b(s)\mathbb{E}[R{s,a}]
\end{equation}
and $p_b(y|a)=\sum_sb(s)p(y|s,a)$. Due to the symmetry of the problem, it can be assumed that for $\beta<0$ -- the region where action $a = -1$ is believed to be better by the agent -- the maximum is obtained for $a=-1$, and the opposite is true for $\beta>0$. Therefore, the equation can be solved separately for the left and right branches of $v^*(b)$, after which the two solutions will be matched.

For the left side, the equation becomes 
\begin{align}
    v^*_{-}(b)&=r_{-1}^Tb\,+\,\gamma  \sum_yp_b(y|a=-1)v_-^*(b'_{-1y}) \notag \\
\end{align}    
and we make the following ansatz:
\begin{equation}
    v^*_-(b)=\frac{r_a^Tb}{1-\gamma}+\phi(b)
\end{equation}
that gives
\begin{align}
    \frac{r_{-1}^Tb}{1-\gamma}+\phi(b)&=r_{-1}^Tb\,+\,\gamma  \sum_yp_b(y|-1)\left[\frac{r_{-1}^Tb'_{ay}}{1-\gamma}+\phi(b'_{-1y})\right] \notag\\
    &=r_{a_1}^Tb\,+\,\gamma  \sum_yp_b(y|-1)\left[\frac{r_{-1}^T\frac{p(y|s,a) b(s)}{p_b(y|-1)}}{1-\gamma}+\phi(b'_{ay})\right] \notag \\
    &=r_{-1}^Tb\,+\,\gamma\sum_y  \frac{r_{-1}^Tp(y|s,-1) b(s)}{1-\gamma}+\sum_yp_b(y|-1)\phi(b'_{-1y}) \notag\\
    &=\frac{r_{-1}^Tb}{1-\gamma}+\sum_yp_b(y|-1)\phi(b'_{ay}) \; .
\end{align}
So we are left with 
\begin{equation}
    \phi(b)=\gamma\sum_yp_b(y|b,a_1)\phi(b'_{ay})
\end{equation}
and we make another ansatz,
\begin{equation}
    \phi(b)=C\prod_{s} b(s)^{\zeta(s)}
\end{equation}
with $\sum_s\zeta(s)=1$. If we temporarily ignore the constant, we have that
\begin{align}
    \phi(b)\propto\prod_{s} b(s)^{\zeta(s)}&= \gamma\sum_yp_b(y|-1)\prod_{s} b_{-1y}(s)^{\zeta(s)} \notag\\ 
    &=\gamma\sum_y\prod_{s}p_b(y|-1)\left(\frac{p(y|s,a) b(s)}{p_b(y|-1)}\right)^{\zeta(s)} \notag\\
    &=\gamma\sum_y\prod_{s}p(y|s,-1)^{\zeta(s)}b(s)^{\zeta(s)} \notag\\
    &=\gamma\:\phi(b)\sum_y\prod_{s}b(s)^{\zeta(s)} \notag\\
    &\iff \sum_y\prod_{s}p(y|s,-1)^{\zeta(s)}=1/\gamma
\end{align}
and if we let $\zeta(s=-1)\equiv \zeta=1-\zeta(s=1)$, for this problem we obtain:
\begin{align}
    (1-\theta)^\zeta\theta^{1-\zeta}+\theta^\zeta(1-\theta)^{1-\zeta}=1/\gamma \; .
    \label{exponent_eq}
\end{align}
The possible solutions for $\zeta$ are given by 
\begin{equation}
    \zeta_{\pm}=\frac{\ln \left( \frac{1 \pm \sqrt{1 - 4\gamma^2\theta + 4\gamma^2\theta^2}}{2\gamma\theta} \right)}
{\ln(\frac{1 - \theta}{\theta})} \; .
\end{equation}
The correct exponent is chosen so as to satisfy the convexity and boundary conditions of the value function. The boundary conditions require that at $\beta=-1$ (when the system is in state $-1$ with probability 1) the value function is equal to that of state -1 in the underlying MDP, $\theta/(1-\gamma)$. The term $r_{-1}^Tb/(1-\gamma)$ in the solution goes exactly to this value at the boundary, so the correct exponent is $\zeta_+$, as $\phi(b)$ then goes to zero at $\beta=-1$.

When solving for the right branch of the solution, the term $r_1^Tb/(1-\gamma)$ replaces $r_{-1}^Tb/(1-\gamma)$ in the ansatz and one obtains the same equation (\ref{exponent_eq}). By the same argument as before, the correct exponent is now given by $\zeta_-$ in this case. It remains to determine the multiplicative constant $C$. At the decision boundary, which is known to be $\beta=0$ out of symmetry, the Bellman equation becomes
\begin{align}
    v^*(0)=1/2+\gamma v^*(2\theta-1)
\end{align}
since the value function at the possible updated beliefs must be equal, and these updated beliefs are $\pm 2\theta \mp1$. The optimal MDP value in the symmetric problem is given by
\begin{equation}
    \bar{v}(\beta)\equiv\frac{\theta}{1-\gamma}
\end{equation}
and the optimal regret is obtained by subtracting the solution for the value function from this expression. 

By expanding around $1-\gamma$, we obtain
\begin{equation}
    \mathcal{R^*}(\beta=0)=\frac{1}{2\delta}-c(\theta)(1-\gamma)+o(1-\gamma)^2
\end{equation}
where the coefficient of the linear term in the analytical expansion becomes
\begin{align*}
    c(\theta)&=-\frac{(\theta -1)}{8 \theta ^2 (2 \theta -1)^3 z_1(\theta)^2} c'(\theta)
\end{align*}
with $z_1(\theta) = \log[(1 - \theta)/\theta]$ and
\begin{align*}
    c'(\theta) & = (2 \theta -3) (2 \theta  (2 \theta -3) (4 \theta +1)+7) z_2(\theta)^2+4 \theta  z_2(\theta) \left(3 z_3(\theta)-8z_1(\theta)-4 \theta  (2 (\theta -2) \theta +3) \log \left(8 (1-\theta)
   \theta ^2\right)\right)+ \\
   & \quad + z_2(\theta) (4 (z_2(\theta) + z_3(\theta))+13 z_1(\theta)+(4 \theta  (4 \theta  (\theta  (4 \theta -5)+6)-11)+9)
   z_3(\theta))-24 \theta ^3 z_3(\theta)^2 \\
   z_2(\theta) & = \log\left[2(1-\theta)\right] \\
   z_3(\theta) & = \log(2\theta) \; .
\end{align*}
Now the ansatz can be evaluated at $\beta=0$ and $\beta=2\theta-1$ and solved for $C$, giving
\begin{equation}
    C=\frac{\gamma (1 - 2\theta)^2}{(1 - \gamma) \sqrt{1 - 4\gamma^2\theta(1 - \theta)}} \; .
\end{equation}
An example of the regrets obtained with the analytical solution and numerically by value iteration for comparison are shown in Figure \ref{fig:AnalyticalSolution} for three instances of the symmetric bandit problem. The analytical solutions match the numerical ones only at specific belief points. This is due to the fact that in the last step of the analytical derivation, it was assumed that the initial belief is $\beta=0$. In practice, when the agent starts at some belief $\beta_0$, it can only visit a countable set of points which correspond to all the values of $\beta$ that can be obtained through a sequence of belief updates given the prior $\beta_0$. The points where the analytical solution matches the numerical one is the set of points that can be visited starting from $\beta_0=0$. 

\begin{figure}[t!]
    \centering
    \includegraphics[width=0.7\linewidth]{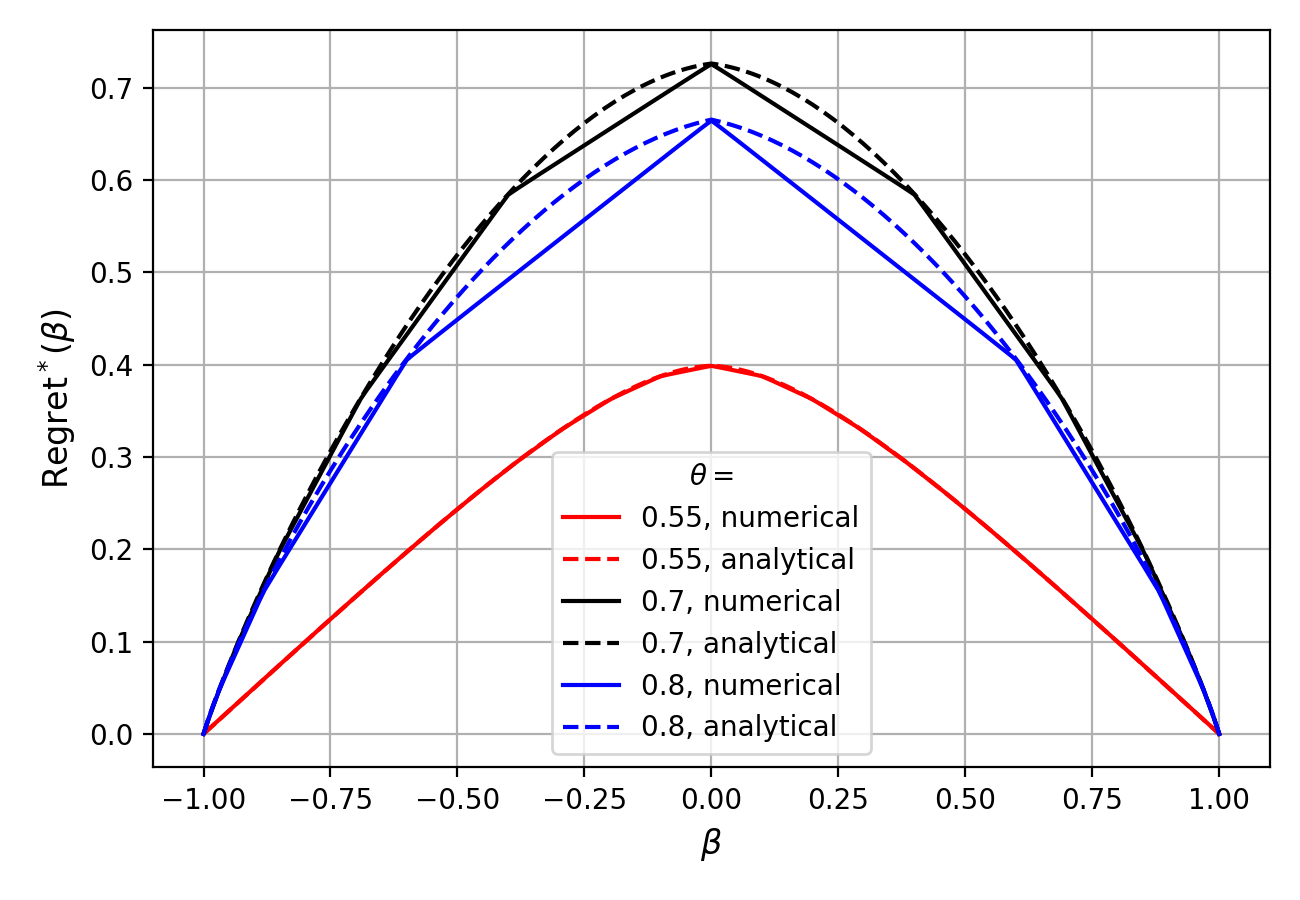}
    \caption{Regrets $\mathcal{R}(\beta)$ obtained numerically and analytically for the symmetric case with $\theta=0.55,0.7,0.8$}
    \label{fig:AnalyticalSolution}
\end{figure}

\subsection*{One Fair Coin Case}
\noindent In the case with one fair coin, the Bellman optimality equation is given by
\begin{align}
    v^*(b)&=\max\Biggl\{1/2+\gamma v^*(b'_{-1y}),\;r_a^Tb\,+\,\gamma  \sum_yp(y|b,a)v^*(b'_{1y})\Biggr\}
\end{align}
Where $b_{\pm1y}'$ is the updated belief when selecting action $\pm1$ and observing y. In the case where the fair coin is selected, $a=-1$, the belief update equation gives $b_{ay}'=b$ and the equation to solve for the left branch of the solution is given by
\begin{equation}
    v_-^*(b)=1/2+\gamma v^*_-(b)\iff v_-^*(b)=\frac{1}{2(1-\gamma)}
\end{equation}
For the right branch of the solution, we make the same Ansatz
as in the symmetric case:
\begin{equation}
    v_+^*(b)=\frac{r_{1}^Tb}{1-\gamma}+C\prod_s b(s)^{\zeta(s)} \;\;\;\;\;\;\text{with}\;\;\;\;\;\; \zeta(-1)=\zeta,\;\;\zeta(1)=1-\zeta
\end{equation}
we find the same solution for the exponent $\zeta_+$ as before. The constant can be determined by the matching condition at the decision boundary $\beta_c$
\begin{equation}
    v_-^*(\beta_c)=v_+^*(\beta_c)=1/2(1-\gamma)
\end{equation}
We now rewrite the Ansatz in terms of the parameter $\beta$ and impose the matching condition
\begin{equation}
    v^*(\beta_c)=\frac{1+\beta_c(2\theta-1)}{2(1-\gamma)}+C\frac{(1-\beta_c)^{\zeta}(1+\beta_c)^{1-\zeta}}{4}=\frac{1}{2(1-\gamma)}
\end{equation}
which gives
\begin{equation}
    C=-\frac{2\beta_c(2\theta-1)}{(1-\gamma)}\frac{1}{(1-\beta_c)^{\zeta}(1+\beta_c)^{1-\zeta}}
\end{equation}
We now ask that $v^{*'}(\beta_c)$ $\theta\to0.5$, which is approximately correct, with increasing precision as $\theta\to0.5$, we get the decision boundary
\begin{equation}
    \beta_c=-\frac{1}{2\zeta-1}<0
\end{equation}

\newpage
\section{Mutual information in the Symmetric Case}
\label{app_mutual}
\noindent The mutual information between the state and the observation given that the selected action is $a$ is
\begin{align}
    \mathrm{I}(S:Y|A=a)&=\sum_{sy} p(s,y|A=a)\log\left(\frac{p(s,y|A=a)}{p(s|A=a)p(y|A=a)}\right) \\
    &=\sum_{sy} b(s)p(y|s,a)\log\left(\frac{b(s)p(y|s,a)}{b(s)\left[\sum_{s'}b(s')p(y|s',a)\right]}\right) \\
    &=\sum_{sy}b(s)p(y|s,a)\log\left(\frac{p(y|s,a)}{\sum_{s'}b(s')p(y|s',a)}\right)
\end{align}
For the symmetric problem (figure \ref{fig:placeholderSchema2armedBB_POMDP}B, table 2), writing the terms in order s=-1,1 and y=0,1, we get
for action $-1$:
\begin{align}
    b(1-\theta)\log\left(\frac{b(1-\theta)}{b(1-\theta)+(1-b)\theta}\right)&+b\theta\log\left(\frac{b\theta}{b\theta+(1-b)(1-\theta)}\right) \\
    +(1-b)\theta\log\left(\frac{(1-b)\theta}{b(1-\theta)+(1-b)\theta}\right)&+(1-b)(1-\theta)\log\left(\frac{(1-b)(1-\theta)}{b\theta+(1-b)(1-\theta)}\right)
\end{align}
and for action $1$:
\begin{align}
    b\theta\log\left(\frac{b\theta}{b\theta+(1-b)(1-\theta)}\right)&+b(1-\theta)\log\left(\frac{b(1-\theta)}{b(1-\theta)+(1-b)\theta}\right) \\
    +(1-b)(1-\theta)\log\left(\frac{(1-b)(1-\theta)}{b\theta+(1-b)(1-\theta)}\right)&+(1-b)\theta\log\left(\frac{(1-b)\theta}{b(1-\theta)+(1-b)\theta}\right)
\end{align}

The terms are exactly the same but in different order, translating the fact that the information gained by losing with action $-1$ is the same as by winning with action $1$ and vice-versa.

\newpage
\bibliography{biblio}

\end{document}